\newcommand\bcmdtab{\noindent\bgroup\tabcolsep=0pt%
  \begin{tabular}{@{}p{10pc}@{}p{20pc}@{}}}
\newcommand\ecmdtab{\end{tabular}\egroup}
  \title[Paracoherent Argumentation Frameworks]
        {Paracoherent Answer Set Semantics meets Argumentation Frameworks}
  \author[G. Amendola, F. Ricca]
         {GIOVANNI AMENDOLA \and FRANCESCO RICCA\\
         University of Calabria, Rende, Italy\\
         \email{\{amendola,ricca\}@mat.unical.it}}
\newtheorem{theorem}{Theorem}
\newtheorem{proposition}{Proposition}
\newtheorem{definition}{Definition}
\newtheorem{example}{Example}
\newcommand{\naf}{\mbox{\textit{not}}\ }
\newcommand{\cS}{{\mathcal F}}
\newcommand{\AS}{AS}
\newcommand{\HT}{{HT}}
\newcommand{\SST}{\mathit{SST}}
\newcommand{\SEQ}{\mathit{SEQ}}
\newcommand{\tok}[1]{\ensuremath{{#1}^\kappa}}
\newcommand{\Ik}{I^\kappa}
\newcommand{\gap}{\mathcal{G}}
\newcommand{\mi}[1]{\ensuremath{\mathit{#1}}}
\newcommand{\mc}{\ensuremath{\mi}{mc}}
\newcommand{\sig}{\Sigma}
\newcommand{\sigk}{\sig^\kappa}
\newcommand{\p}{P}
\newcommand{\pk}{P^\kappa}
\newcommand{\bodyn}[1]{B^-(#1)}
\newcommand{\la}{\leftarrow}
\newcommand{\toht}[1]{\ensuremath{{#1}^{\HT}}}
\begin{document}
\nocite{*}

\label{firstpage}

\maketitle

  \begin{abstract}
In the last years, abstract argumentation has met with great success in AI, since it has served to capture several non-monotonic logics for AI. 
Relations between argumentation framework (AF) semantics and logic programming ones are investigating more and more.
In particular, great attention has been given to the well-known stable extensions of an AF, that are closely related to the answer sets of a logic program. 
However, if a framework admits a small incoherent part, no stable extension can be provided. To overcome this shortcoming, two semantics generalizing stable extensions have been studied, namely semi-stable and stage.  
In this paper, we show that another perspective is possible on incoherent AFs, called paracoherent extensions, as they have a counterpart in paracoherent answer set semantics. 
We compare this perspective with semi-stable and stage semantics, by showing that computational costs remain unchanged, and moreover an interesting symmetric behaviour is maintained.
Under consideration for acceptance in TPLP.
  \end{abstract}

  \begin{keywords}
    Answer Set Programming, Abstract Argumentation, Paracoherent reasoning
  \end{keywords}


\section{Introduction}

In the last years, abstract argumentation theory~\cite{DBLP:journals/ai/Dung95} has met with great success in AI~\cite{DBLP:journals/ai/Bench-CaponD07}, since it has served to capture several non-monotonic logics for AI.
%
Recently, relations between abstract argumentation semantics and logic programming semantics has been studied systematically in \cite{DBLP:journals/ai/Strass13,DBLP:journals/ijar/CaminadaSAD15}.
These can be highlighted  by using a well-known tool for translating Argumentation Frameworks (AFs) to logic programs~\cite{DBLP:journals/sLogica/WuCG09}. 
In particular, given an AF $F$ one can build the corresponding logic program, $P_{F}$ as follows:
For each argument $a$ in $F$, if $b_1$, $b_2$, ..., $b_m$ is the set of its defeaters,  construct the rule
$ a \leftarrow \naf b_1, \ \naf b_2, \ \ldots, \ \naf b_m$.
Intuitively, each of these rules means that an argument is accepted (inferred as true) if, and only if, all of its defeaters are rejected (false) for some reason.
It is known that 
the well-founded model of $P_{F}$~\cite{DBLP:journals/jacm/GelderRS91} corresponds to the grounded extension of $F$~\cite{DBLP:journals/ai/Dung95}; 
stable models (or answer sets) of $P_{F}$~\cite{DBLP:journals/ngc/GelfondL91} correspond to stable extensions of $F$~\cite{DBLP:journals/ai/Dung95}; 
regular models~\cite{DBLP:journals/jcss/YouY94} correspond to preferred extensions; P-stable models~\cite{DBLP:journals/ai/Przymusinski91} correspond to complete extensions~\cite{DBLP:journals/sLogica/WuCG09}; and L-stable models~\cite{DBLP:journals/amai/EiterLS97} correspond to semi-stable extensions~\cite{DBLP:journals/ijar/CaminadaSAD15}.
%
%
Focusing on the stable semantics, we recall that a stable extension is defined as a conflict-free set (i.e., no argument in the set attacks another one in the set) that attacks every other argument outside of it~\cite{DBLP:journals/ai/Dung95}.
In particular, the labelling-approach~\cite{DBLP:journals/logcom/JakobovitsV99} implies that no argument is labelled as undecidable (i.e., an argument will be either true or false).
This condition yields very solid solutions.
For instance, consider the AF reported in Figure~\ref{fig:noStable}(a). 
As both arguments $a$ and $c$ are attacked by no other argument, they will be true; so that, $b$ will be false (it is attacked by $a$ and $c$); and, finally, $d$ will be true, as it is attacked by $b$ only. Hence, $\{a,c,d\}$ is a stable extension.
However, this solidity is given at the price of not offering any solution in many situations. 
In particular, if the AF includes even a small part admitting no stable extension, no stable extension can be provided for the entire AF. This is explained by saying that this semantics is not ``crash resistant''~\cite{DBLP:journals/logcom/CaminadaCD12}.
For instance, if we consider the AF reported in Figure~\ref{fig:noStable}(b), that differs from the previous one only for an attack from $a$ to itself, then no stable extension exists. 


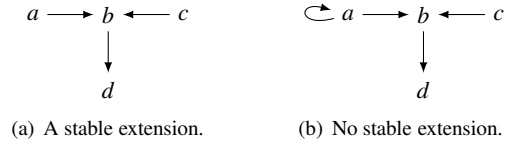
\begin{figure}
\centering
{
\subfigure[A stable extension.]
{
\begin{tikzpicture}[scale=1.0, transform shape]

\node[] (a) at (0:0) {$a$};
\node[] (f) at (0:-0.5) {};
\node[] (e) at (0:2.5) {};
\node[] (b) [right of= a] {$b$};
\node[] (c) [right of= b] {$c$};
\node[] (d) [below of= b] {$d$};

\path[->, >=latex] (a)	edge node {} (b)
					(c)	edge node {} (b)
					(b)	edge  node {} (d);
\end{tikzpicture}
}
\hspace*{0.5cm}
\subfigure[No stable extension.]
{
\begin{tikzpicture}[scale=1.0, transform shape]

\node[] (a) at (0:0) {$a$};
\node[] (b) [right of= a] {$b$};
\node[] (c) [right of= b] {$c$};
\node[] (d) [below of= b] {$d$};

\path[->, >=latex] (a)	edge [loop left] node {} (a)
						edge node {} (b)
					(c)	edge node {} (b)
					(b)	edge  node {} (d);
\end{tikzpicture}
}
}
\caption{Examples of argumentation frameworks.}
\label{fig:noStable}
\end{figure}

To overcome this shortcoming, argumentation semantics that extend the stable one have been proposed. More specifically, 
they coincide with the stable semantics, whenever a stable extension exists; and provide others solutions, whenever no stable extension exists. 
Currently, there are two semantics with these properties:
the Stage semantics~\cite{Verheij96twoapproaches} and
the Semi-stable semantics~\cite{DBLP:conf/comma/Caminada06}.
A stage extension is a conflict-free set of arguments $A$, such that $A\cup A^{+}$ is (subset) maximal with respect to conflict-free sets, where $A^{+}$ denotes the set of all arguments attacked by an argument of $A$.
A semi-stable extension is a 
conflict-free set $A$, where each argument attacking an argument of $A$, is in turn attacked by an argument of $A$, such that $A\cup A^{+}$ is maximal.
By considering the AF reported in Figure~\ref{fig:noStable}(b), the set $\{c,d\}$ is both a stage extension and a semi-stable extension. However, in general, stage semantics and semi-stable semantics are very different~\cite{DBLP:journals/logcom/CaminadaCD12}.

In this paper, we provide an alternative view on how to generalize the stable semantics, based on paracoherent semantics introduced in Answer Set Programming (ASP)~\cite{DBLP:journals/cacm/BrewkaET11}.
Actually, there are two main paracoherent semantics for logic programs. The first was introduced by~\cite{DBLP:journals/logcom/SakamaI95}, and it is known as \textit{semi-stable model semantics} (not to be confused with semi-stable semantics in argumentation); while the most recent was introduced by~\cite{DBLP:conf/kr/EiterFM10} to avoid some anomalies concerning modal logic properties, and it is known as \textit{semi-equilibrium model semantics}.
As we will see, these two semantics coincide in our settings, and we refer to them as \textit{paracoherent answer sets}. 

\begin{figure}
\centering 
\subfigure[$F_a$]
{
\begin{tikzpicture}[scale=1.0, transform shape]
\def \n {4}
\def \radius {1cm}
\def \radiusExt {2cm}
\def \margin {8} 
\def \angle {0}

\node[] (c) at (0:0) {$s$};
\node[] (b1) at ({360/\n * (1 )+\angle}:\radius) {$js$};
\node[] (b2) at ({360/\n * (2 )+\angle}:\radius) {$ms$};
\node[] (b3) at ({360/\n * (3 )+\angle}:\radius) {$as$};
\node[] (b4) at ({360/\n * (4 )+\angle}:\radius) {$rs$};
\node[] (a1) at ({360/\n * (1 + 0.5)+\angle}:\radiusExt) {$jm$};
\node[] (a2) at ({360/\n * (2 + 0.5)+\angle}:\radiusExt) {$am$};
\node[] (a3) at ({360/\n * (3 + 0.5)+\angle}:\radiusExt) {$ar$};
\node[] (a4) at ({360/\n * (4 + 0.5)+\angle}:\radiusExt) {$jr$};

\foreach \s in {1,...,\n}
{

  \path[->, >=latex] ({360/\n * (\s + 0.5)+\angle+\margin}:1.8) 
    edge ({360/\n * (\s + 1.5)+\angle-\margin}:1.8);


  \path[->, >=latex]	({360/\n * (\s - 0.5)+\angle}:\radiusExt-0.3cm)  edge ({360/\n * (\s - 0.8)+\angle}:0.2cm+\radius);
  \path[->, >=latex]	({360/\n * (\s - 0.5)+\angle}:\radiusExt-0.3cm)  edge ({360/\n * (\s - 0.2)+\angle}:0.2cm+\radius);

  \path[->, >=latex]	({360/\n * (\s - 1)+\angle}:-0.2cm+\radius-2)  edge (c);
}
\end{tikzpicture}
}
\hspace*{0.5cm}
%
%
%
%
%
%
\subfigure[$F_b$]
{
\begin{tikzpicture}[scale=1.0, transform shape]
\def \n {3}
\def \radius {0.7cm}
\def \radiusExt {1.7cm}
\def \margin {10} 

\node[] (c) at (0:0) {$s$};

\node[] (b1) at ({360/\n * (1 )}:\radius) {$js$};
\node[] (b2) at ({360/\n * (2 )}:\radius) {$ms$};
\node[] (b3) at ({360/\n * (3 )}:\radius) {$as$};

\node[] (a1) at ({360/\n * (1 +0.5)}:\radiusExt) {$jm$};
\node[] (a2) at ({360/\n * (2 +0.5)}:\radiusExt) {$am$};
\node[] (a3) at ({360/\n * (3 +0.5)}:\radiusExt) {$aj$};

\foreach \s in {1,...,\n}
{

  \path[->, >=latex] ({360/\n * (\s + 0.5)+\margin}:\radiusExt-0.2cm) 
    edge ({360/\n * (\s + 1.5)-\margin}:\radiusExt-0.2cm);

  \path[->, >=latex]	({360/\n * (\s - 0.5)}:\radiusExt-0.3cm)  edge ({360/\n * (\s - 0.8)}:0.05cm+\radius-0.5);
  \path[->, >=latex]	({360/\n * (\s - 0.5)}:\radiusExt-0.3cm)  edge ({360/\n * (\s - 0.2)}:0.05cm+\radius-0.5);

  \path[->, >=latex]	({360/\n * (\s - 1)}:-0.2cm+\radius)  edge (c);
}
\end{tikzpicture}
}
\caption{Argumentation frameworks for SRP.}
\label{fig:roomates}
\end{figure}
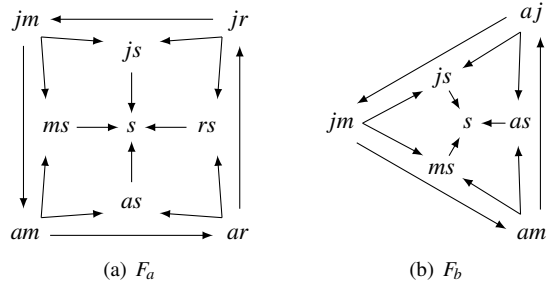

A distinctive property of the new paracoherent semantics for AF is a symmetric behaviour, that is not present in stage and semi stable semantics.
To exemplify it, we use a concrete classic application of AF considered orginally by~\cite{DBLP:journals/ai/Dung95}, namely the \textit{Stable Roommates Problem} (SRP). 
SRP is a matching problem with several variants and real-word applications, where persons have to be matched while respecting their preferences~\cite{marriage}. 
Suppose that 
 Mark ($m$) prefers John ($j$) to Annie ($a$), and Annie to Shrek ($s$); 
 John prefers  Robert ($r$) to Mark, and Mark to Shrek; 
 Robert prefers Annie to John, and John to Shrek; 
 Annie prefers Marc to Robert, and Robert to Shrek; 
 while Shrek wants to stay alone.
The AF $F_{a}$ in Figure~\ref{fig:roomates}(a) models the problem using arguments for possible matchings that attack each other according to preferences. 
Note that $F_{a}$ admits two stable extensions (i.e., matchings).
One solution pairs John and Mark ($jm$), Annie and Robert ($ar$), while Shrek stays alone; and the other solution pairs Annie and Mark ($am$), John and Robert ($jr$), again Shrek stays alone.
Hence ``Shrek is alone'' is a skeptical argument, as expected.
Consider now the AF $F_b$ in Figure~\ref{fig:roomates}(b) with one less person and the following preferences: Mark prefers John to Annie, and Annie to Shrek; John prefers Annie  to Mark, and Mark to Shrek; Annie prefers Mark  to John, and John to Shrek; again Shrek wants to stay alone. 
Despite the AF $F_{b}$ has the same symmetric structure of $F_{a}$, it has no stable extension (due to odd-length cycles). This means that the SRP problem has no solution, and, in practice, one has to accept a non comfortable solution (i.e., a non stable one). 
Pragmatically, one can minimizes the number of persons that have no roommate, and always keep Shrek (disliking all mates) alone, as in the stable case. 
Unluckily, the semi-stable semantics cannot suggest a practical solution to the problem (as the unique admissible set is the empty one); 
on the other hand, it can be verified that in stage extensions Shrek is always paired. 
Thus, both semantics extending the stable one do not behave as in $F_a$, where Shrek is always alone and persons that have no roommate are minimized, i.e., we would expect to pair Mark with John or John with Annie or Annie with Mark.
This symmetric behavior is kept in the semantics introduced here.
As argued in~\cite{DBLP:journals/ai/BaroniGG05a} ``it is counter-intuitive that different results in conceptually similar situations depend on the length of the cycle: symmetry reasons suggest that all cycles should be treated equally and should yield the same results".
According to this observation, our semantics provides an approach that is more adherent to this ideal behavior than related proposals.
Moreover, paracoherent extensions could represent plasible scenarios in several real world applications of AFs where odd cycles naturally appear, such as legal reasoning~\cite{DBLP:journals/ail/PrakkenS96,DBLP:journals/ai/Verheij03,DBLP:journals/logcom/Gabbay16a}, dialog and negotiation~\cite{DBLP:conf/ecai/AmgoudPM00}, planning~\cite{DBLP:journals/ai/Pollock98}, and traffic networks~\cite{thanksMarco}.

In the paper, we define the \textit{paracoherent extensions} for AFs that are based on the concept of \textit{stabilizer}. Stabilizers capture what is missing to a set of arguments to become stable.
Next, we show that paracoherent extensions can be considered as generalization of the stable extensions, just like the semi-stable and the stage ones.
Then, by using the direct translation from AFs to logic programs, we show a correspondence between paracoherent extensions and paracoherent answer sets.
Moreover, we prove that credulous and skeptical reasoning for paracoherent semantics  are $\Sigma_2^P$-complete and $\Pi_2^P$-complete, respectively (as for semi-stable and stage~\cite{DBLP:books/sp/09/DunneW09,DBLP:conf/jelia/DunneC08,DBLP:journals/ipl/DvorakW10}).
Finally, we discuss and relate the paracoherent semantics with several existing ones, showing that paracoherent semantics has an interesting symmetric behaviour on graphs with odd-length cycles.



\section{Preliminaries}\label{Sec:prel}
We overview basic concepts in argumentation and paracoherent answer set semantics.
\subsection{Abstract Argumentation Semantics}

An AF $F$ is a pair $( Ar,att )$, where $Ar$ is a finite set of \textit{arguments}, and $att\subseteq Ar\times Ar$ is a set of \textit{attacks}. Hence, an AF can be represented by a directed graph, where nodes are arguments, and edges are attacks. 
For instance, concerning the AF in Figure~\ref{fig:noStable}(a), we have $Ar=\{a,b,c,d\}$, and $att=\{(a,b),(c,b),(b,d)\}$.
Let $A\subseteq Ar$ be a set of arguments.
We denote by $A^{+}$ the set of all arguments in $Ar$ attacked by an argument in $A$, i.e. $A^{+}=\{b\in Ar \mid (a,b)\in att\mbox{, and } a\in A\}$.
Then, $A$ is \textit{conflict-free} in $F$ if, for each $a,b\in A$, $(a,b)\not\in att$; 
$A$ is \textit{admissible} in $F$ if $A$ is conflict-free, and, given $a\in A$, for each $b\in Ar$ with $(b, a)\in att$, there is $c \in A$ such that $(c,b)\in att$ (say that $a$ is \textit{defended} by $A$ in $F$);
$A$ is \textit{complete} in $F$ if $A$ is admissible in $F$ and each $a\in Ar$ defended by $A$ in $F$ is contained in $A$;
$A$ is a \textit{stable extension} if $A^{+} = Ar\setminus A$ (i.e., $A$ is conflict-free, and 
for each $a \in Ar\setminus A$, there is $b\in A$, such that $(b, a)\in att$);
$A$ is a \textit{semi-stable extension} if $A$ is complete and $A\cup A^{+}$ is maximal
(hereafter, we write just maximal for maximal w.r.t. subset inclusion);
$A$ is a \textit{stage extension} if $A$ is conflict-free and, for each $B$ conflict-free, $A\cup A^{+}\not\subset B\cup B^{+}$  (i.e., $A\cup A^{+}$ is maximal with respect to conflict-free sets).
We denote by $\mathit{cf}(F)$, $\mathit{adm}(F)$, $\mathit{comp}(F)$, $\mathit{stb}(F)$, $\mathit{sem}(F)$, $\mathit{stage}(F)$, the sets of all conflict-free, admissible, complete, stable, semi-stable, and stage extensions, respectively.

\begin{example}\label{Ex:general}
Let $F=(\{a,b,c,d\},\{(a,b),(c,b),(b,d)\}$ be the AF reported in Figure~\ref{fig:noStable}(a).
We have $\mathit{cf}(F)=\{\emptyset,$ $\{a\},$ $\{b\},$ $\{c\},$ $\{d\},$ $\{a,c\},$ $\{a,d\},$ $\{c,d\},$ $\{a,c,d\} \}$;
$\mathit{adm}(F)=\{\emptyset,$ $\{a\},$ $\{c\},$ $\{a,c\},$ $\{a,d\},$ $\{c,d\},$ $\{a,c,d\} \}$;
$\mathit{comp}(F)=\{\{a,c,d\} \}$;
$\mathit{stb}(F)=\mathit{sem}(F)=\mathit{stage}(F)=\{ \{a,c,d\} \}$.
\end{example}

\noindent It is known that $\mathit{cf}(F)$ $\supseteq$ $\mathit{adm}(F)$ $\supseteq$ $\mathit{comp}(F)$ $\supseteq$ $\mathit{sem}(F)$ $\supseteq$ $\mathit{stb}(F)$; while 
$\mathit{cf}(F)$ $\supseteq$ $\mathit{stage}(F)$ $\supseteq$ $\mathit{stb}(F)$. 
Moreover, both semi-stable and stage semantics coincide with the stable one, whenever a stable extension exists; and provide others solutions, whenever no stable one exists~\cite{DBLP:journals/logcom/CaminadaCD12}. 
The main difference between the two semantics is that, on the one hand, a stage extension is a maximal conflict-free set, while in general the semi-stable extension is not; 
on the other hand, a semi-stable extension is an admissible extension, while in general the stage extension is not.
\begin{example}\label{Ex:Stage-Semi}
The AF reported in Figure~\ref{fig:stage-semi} has no stable exstension, while it has a unique semi-stable extension, $\{a,d\}$; and 3 stage ones $\{a,c,e\}$, $\{a,c,g\}$, and $\{a,d,g\}$.
\end{example}
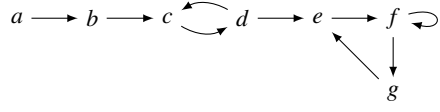
\begin{figure}
\centering
{
\begin{tikzpicture}[scale=1.0, transform shape]

\node[] (a) at (0:0) {$a$};
\node[] (b) [right of= a] {$b$};
\node[] (c) [right of= b] {$c$};
\node[] (d) [right of= c] {$d$};
\node[] (e) [right of= d] {$e$};
\node[] (f) [right of= e] {$f$};
\node[] (g) [below of= f] {$g$};

\path[->, >=latex] (a)	edge node {} (b)
					(b)	edge node {} (c)
					(c)	edge [bend right] node {} (d)
					(d)	edge [bend right] node {} (c)
						edge node {} (e)
					(e)	edge node {} (f)
					(f)	edge [loop right] node {} (f)
						edge node {} (g)
					(g) edge node {} (e);
\end{tikzpicture}
}
\caption{Argumentation framework of Examples~\ref{Ex:Stage-Semi},~\ref{ex:para-sem-stage},~\ref{ex:PF},~\ref{ex:para=seq}.}
\label{fig:stage-semi}
\end{figure}


Moreover, we also recall the following decision problems corresponding to reasoning tasks on AFs.
Given an AF $F$, a semantics $\sigma$, and an argument $a$, decide whether $a$: 
$(1)$ is contained in some $\sigma$-extension of $F$ (credulous reasoning);
$(2)$ is contained in all $\sigma$-extensions of $F$ (skeptical reasoning).
It is known that credulous reasoning is $\mathsf{NP}$-complete for stable semantics, and $\Sigma_2^P$-complete for semi-stable and stage semantics; 
while skeptical reasoning is $\mathsf{coNP}$-complete for stable semantics, and $\Pi_2^P$-complete for semi-stable and stage semantics~\cite{DBLP:books/sp/09/DunneW09,DBLP:conf/jelia/DunneC08,DBLP:journals/ipl/DvorakW10}.

\subsection{Paracoherent Answer Set Semantics}

We concentrate on logic programs over a propositional signature $\Sigma$.
A \textit{rule} $r$ is of the form 
\begin{equation}\label{eq:rule}
a_1 \vee\ldots\vee a_l \leftarrow b_1,\ldots,b_m, \naf c_1,\ldots,\naf c_n
\end{equation} 
\noindent where all $a_i$, $b_j$ and $c_k$ are atoms (from $\Sigma)$; 
$l,m,n\geq 0$, and $l+m+n>0$; 
$\naf$represents \textit{default negation}. 
The set $H(r)=\lbrace a_{1},...,a_{l} \rbrace$ is the \textit{head} of $r$, 
while $B^{+}(r)=\lbrace b_{1},...,b_{m} \rbrace$ and $B^{-}(r)=\lbrace
c_{1},\ldots,c_{n} \rbrace$ are
the \textit{positive body} and the \textit{negative body} of $r$,
respectively;
the \textit{body} of $r$ is $ B(r)=B^{+}(r)\cup B^{-}(r)$. 
If $B(r)=\emptyset$, we then omit $\leftarrow$; 
and if 
$|H(r)| \leq 1$,
then $r$ is \textit{normal}. 
A \textit{program} $P$ is a
finite set of rules. 
$P$ is called 
\textit{normal} 
if each $r\in P$ is 
normal. 

Any set $I\subseteq \Sigma$ is an \textit{interpretation}.
$I$ is a \textit{model} of a program $P$ (denoted
$I\models P$) iff for each rule $r\in P$, $I\cap H(r)\neq \emptyset$ whenever
$B^{+}(r)\subseteq I$ and $B^{-}(r)\cap I=\emptyset$ (denoted $I \models
r$).  
A model $M$ of $P$ is \textit{minimal} iff no model $M'\subset M$ of $P$ exists.
Given an interpretation $I$, we denote by
$P^I$ the well-known \textit{Gelfond-Lifschitz reduct} \cite{DBLP:journals/ngc/GelfondL91} of
$P$ w.r.t. $I$,
that is the set of rules
$ a_{1}\vee ...\vee a_{l} \leftarrow b_{1},...,b_{m}$,
obtained from rules $r\in P$ of form (\ref{eq:rule}), such that
$B^{-}(r)\cap I=\emptyset$.
A model $M$ of $P$ is called {\em answer set} (or {\em stable model}) of $P$, if $M$ is a minimal model of $P^M$.	 
We denote by $AS(P)$ the set of all answer sets of $P$.
%
%
Next, we introduce two paracoherent semantics.
The first one is known as \emph{semi-stable model semantics}~\cite{DBLP:journals/logcom/SakamaI95}.
We consider an extended signature
$\sigk = \sig\cup\{Ka\mid a\in \sig\}$.
Intuitively, $Ka$ can be read as $a$ is believed to hold. 
The semi-stable models of a program $\p$ are obtained from its \emph{epistemic $\kappa$-transformation}. 
\begin{definition}
[Epistemic $\kappa$-transformation $P^{\kappa}$]\label{def:epi-trans}
Let $P$ be a program. Then its epistemic $\kappa$-transformation is defined as 
the program $P^{\kappa}$ obtained from $P$ by replacing each rule $r$  
of the form~(\ref{eq:rule}) in $P$, such that $\bodyn{r}\neq\emptyset$, with:
\begin{align}
\lambda_{r,1} \vee \ldots \vee \lambda_{r,l} \vee Kc_{1} \vee \ldots \vee Kc_n & \la  b_1,\ldots, b_m; \label{eq:ep-1}  \\
a_i & \la \lambda_{r,i}; \label{eq:ep-2} \\
    & \la \lambda_{r,i}, c_j; \label{eq:ep-3} \\
\lambda_{r,i} & \la  a_i, \lambda_{r,k}; \label{eq:ep-4}
\end{align}
for $1\leq i,k\leq l$ and $1\leq j\leq n$, where $\lambda_{r,i}$ are fresh atoms.
\end{definition}
Given an interpretation $\Ik$ over $\sig'\supseteq\sigk$, let $\gap(\Ik)=\{ Ka\in\Ik\ \mid a\not\in\Ik\}$ denote the atoms believed true but not assigned true, also referred to as the gap of $\Ik$.
Given a set $\cS$ of interpretations over $\sig'$, an interpretation $\Ik\in \cS$ is \emph{maximal canonical in $\cS$}, if no $\tok{J}\in \cS$ exists such that 
$\gap(\Ik)\supset\gap(\tok{J})$.
By $\mc(\cS)$ we denote the set of maximal canonical interpretations in $\cS$.
Semi-stable models are defined as maximal canonical interpretations among the answer sets of $\pk$, and
the set of all semi-stable models of $\p$ is denoted by $\SST(\p)$,
i.e., $\SST(\p)= \{  S \cap \sigk \mid S\in\mc(\AS(\pk))\}$.

The second one is called \textit{semi-equilibrium model semantics} and was introduced by~\cite{DBLP:conf/kr/EiterFM10}
to amend anomalies in semi-stable model semantics.
Semi-equilibrium models may be computed as maximal canonical 
answer sets of an extension of the epistemic $\kappa$-transformation. 

\begin{definition}[Epistemic $\HT$-transformation $\toht{\p}$]\label{def:ht-trans}
Let $\p$ be a program over $\sig$. 
Then its epistemic $\HT$-transformation 
$\toht{\p}$ is defined as the union of $\pk$ with the set of rules:

\begin{align}
Ka & \la a, \label{eq:ep-5}\\
Ka_1 \vee \ldots \vee Ka_l \vee Kc_{1} \vee \ldots \vee Kc_n 
&\la  Kb_1,\ldots, Kb_m,  \label{eq:ep-6}
\end{align}
\noindent for $a \in \sig$, respectively for every rule $r\in\p$ of the form~(\ref{eq:rule}).
\end{definition}
Then, the set of all semi-equilibrium models is given by
$\{ M\cap\sigk \mid M\in\mc(\AS(\toht{\p}))\}$ and is denoted by 
$\SEQ(\p)$.
In the following, we refer to semi-stable models or semi-equilibrium models as \textit{paracoherent answer sets}, and
we will be interested to consider only the true atoms of each paracoherent answer set. Hence, we denote by $\SST^{\bf t}(P)$ the set $\{  M \cap \sig \mid M\in \SST(P)\}$, and by $\SEQ^{\bf t}(P)$, the set $\{ M\cap\sig \mid M\in \SEQ(P)\}$.

\section{Paracoherent Extensions}

In this section, we formally introduce the paracoherent semantics for argumentation.
Our goal is to provide another reasonable generalization of the stable semantics. 
Intuitively, we start identifying what is missing to a candidate set of arguments to become stable.
This intuition is formalized by the basic concept of stabilizer.

\begin{definition}[Stabilizer]
Let $F = (Ar,att)$ be an AF. A set of arguments $S\subseteq Ar$ is called a \textit{stabilizer}, if there exists $A\subseteq Ar$, such that  $A^+\cup S^+ = Ar\setminus A$.
We will say that \textit{$S$ is a stabilizer of $A$}, and also that \textit{$A$ admits $S$ as stabilizer}.
\end{definition}

\begin{example}\label{ex:caut}
The AF reported in Figure~\ref{fig:1},
has the empty set as the unique admissible set of arguments.
Now, let $A=\{a\}$. Since $a$ attacks $b$ and $d$, we have that $A^+=\{b,d\}$. Therefore, a set of arguments $S$ to be a stabilizer of $A$ needs to attack at least arguments $c$ and $e$, and is not allowed to attack $a$. 
So that, $S=\{b,d\}$ is a stabilizer of $A$.
Indeed $A^+\cup S^+$ $=$ $\{b,$ $c,$ $d,$ $e\}$ $=$ $Ar\setminus A$.
Similarly, it can be seen that $\{a,b,d\}$, $\{b,d,e\}$ and $\{a,b,d,e\}$ are the remaining stabilizers of $A$.
\end{example}

Note that, in general, there are sets of arguments that do not admit any stabilizer. E.g., by considering  the AF in Example~\ref{ex:caut}, then the set $A'$ $=$ $\{a,b\}$ is such. This is due to the fact that $a$ attacks $b$, thus $b\in A^+ \subseteq (A^+\cup S^+)$, for each $S\subseteq Ar$.  In general, we prove the following results.

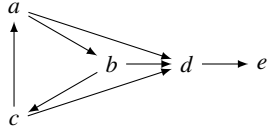
\begin{figure}
\centering
{
\begin{tikzpicture}[scale=1.0, transform shape]

\node[] (a) at (-30:-1.5) {$a$};
\node[] (b) at (0:0) {$b$};
\node[] (c) at (30:-1.5) {$c$};
\node[] (d) [right of= b] {$d$};
\node[] (e) [right of= d] {$e$};
\path[->, >=latex] (b)	edge node {} (d)
						edge node {} (c)
					(a)	edge node {} (d)
						edge node {} (b)
					(c)	edge node {} (d)
						edge node {} (a)
					(d)	edge node {} (e);
\end{tikzpicture}
}
\caption{Argumentation framework of the Examples~\ref{ex:caut},~\ref{ex:paraExtension}.}
\label{fig:1}
\end{figure}

\begin{proposition}\label{thm:properties}
Let $F = (Ar,att)$ be an AF, and let $S,A\subseteq Ar$ be two sets of arguments. Then,
$(1)$ if $S$ is a stabilizer of $A$, then $A$ is conflict-free;
$(2)$ if $S$ is a stabilizer of $A$, then $A \cap S^+=\emptyset$; and
$(3)$ $\emptyset$ is a stabilizer of $A$ iff $A$ is stable.
\end{proposition}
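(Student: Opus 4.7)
The proof is essentially a matter of unpacking the definition of stabilizer and comparing it to the definitions of conflict-free and stable extension, so I expect no real obstacle here; the three items will follow directly.

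For part (1), I would start from the hypothesis $A^+\cup S^+ = Ar\setminus A$. Since the union contains $A^+$, we have $A^+\subseteq Ar\setminus A$, i.e.\ $A\cap A^+ = \emptyset$. But $A\cap A^+ = \emptyset$ is precisely the statement that no argument of $A$ is attacked by an argument of $A$, i.e.\ $A$ is conflict-free in $F$.

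For part (2), I would again use $A^+\cup S^+ = Ar\setminus A$, this time to get $S^+\subseteq Ar\setminus A$, which is equivalent to $A\cap S^+=\emptyset$.

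For part (3), I would prove the two directions separately. For the forward direction, if $\emptyset$ is a stabilizer of $A$, then by definition $A^+\cup\emptyset^+ = Ar\setminus A$; since $\emptyset^+=\emptyset$, this reduces to $A^+=Ar\setminus A$. Combined with conflict-freeness from part (1), this is exactly the definition of a stable extension. For the converse, if $A$ is stable, then $A^+=Ar\setminus A$ and in particular $A^+\cup\emptyset^+ = Ar\setminus A$, so $\emptyset$ witnesses the stabilizer property for $A$.
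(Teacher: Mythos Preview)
Your proof is correct and follows exactly the natural route of unfolding the defining equation $A^+\cup S^+ = Ar\setminus A$ in each case. The paper does not actually include a proof of this proposition (it is stated without proof, presumably as immediate from the definitions), so there is nothing substantive to compare against; your argument is precisely the expected one. One minor remark: in part~(3) the appeal to part~(1) for conflict-freeness is harmless but redundant, since the paper's definition of stable extension is literally $A^+ = Ar\setminus A$, which already entails $A\cap A^+=\emptyset$.
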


%
%

To define argumentation semantics as close as possible to the stable extension, we need to select sets of arguments that admit stabilizers of minimal size. Thus, given an AF $F=(Ar,att)$, let $\Sigma_F = \{S \mid \exists A\subseteq Ar \mbox{ s.t. } S \mbox{ is a stabilizer of } A\}$ be the set of all possible stabilizers.

\begin{definition}[Paracoherent Extension]
Let $F = (Ar,att)$ be an AF. A set of argument $A\subseteq Ar$ is a \textit{paracoherent extension} of $F$, if $A$ admits a minimal stabilizer among $\Sigma_F$ w.r.t. $\subseteq$.
\end{definition}

\begin{example}\label{ex:paraExtension}
Consider the AF of Example~\ref{ex:caut}.
Note that $\emptyset$ is not a stabilizer, since by Proposition~\ref{thm:properties} $F$ does not admit any stable extension. However, for instance, $A=\{a,e\}$ is a paracoherent extension.
Indeed, $S=\{b\}$ is a minimal stabilizer of $A$, as $(A^+\cup S^+) = \{a,b,e\}^+ = \{b,c,d\} = Ar\setminus A$.
\end{example}

Now, we show formally that the introduced argumentation semantics behaves as an extension of the stable argumentation one such as the semi-stable and the stage semantics.

\begin{theorem}\label{thm:extended}
Let $F = (Ar,att)$ be an AF. 
If $A\subseteq Ar$ is a stable extension, then $A$ is also a paracoherent extension.
\end{theorem}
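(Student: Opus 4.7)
The plan is to exploit Proposition~\ref{thm:properties}(3), which states that $\emptyset$ is a stabilizer of $A$ precisely when $A$ is stable. Given that $A$ is assumed stable, this immediately hands us a stabilizer of $A$, namely $\emptyset$ itself.

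From this, I would argue as follows. Since $\emptyset$ is a stabilizer of $A$, we have $\emptyset \in \Sigma_F$ by the very definition of $\Sigma_F$. Next, I would observe that $\emptyset$ is trivially a minimal element of $\Sigma_F$ with respect to $\subseteq$, because no set can be a proper subset of the empty set. Thus $A$ admits a minimal stabilizer in $\Sigma_F$, which is exactly the definition of a paracoherent extension. Hence $A$ is a paracoherent extension.

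I do not anticipate any obstacle here: the result is essentially a one-line corollary of Proposition~\ref{thm:properties}(3) combined with the fact that $\emptyset$ is $\subseteq$-minimal in any family containing it. The only thing to double-check is that the definition of paracoherent extension requires the minimal stabilizer to be minimal in $\Sigma_F$ (the family of \emph{all} stabilizers of \emph{all} candidate sets), not merely minimal among stabilizers of $A$; but since $\emptyset$ is minimal in any collection it belongs to, this distinction is harmless.
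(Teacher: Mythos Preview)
Your proposal is correct and essentially identical to the paper's own proof: the paper also invokes Proposition~\ref{thm:properties}(3) to conclude that $\emptyset$ is a stabilizer of $A$, then observes that $\emptyset$ is minimal among all stabilizers in $\Sigma_F$, hence $A$ is a paracoherent extension. Your additional remark that minimality must be understood in $\Sigma_F$ (not just among stabilizers of $A$) is a valid clarification, but indeed harmless for the reason you give.
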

\begin{proof}
Let $A$ be a stable extension. Then, by Proposition~\ref{thm:properties}(3), $\emptyset$ is a stabilizer of $A$. As $\emptyset$ is minimal among all stabilizers in $\Sigma_F$ , then $A$ is a paracoherent extension.
\end{proof}

\begin{theorem}\label{thm:coincidence}
Let $F = (Ar,att)$ be an AF. 
If $\mathit{stb}(F) \neq \emptyset$, then $\mathit{para}(F)=\mathit{stb}(F)$.
\end{theorem}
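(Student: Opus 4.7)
The plan is to prove the theorem by establishing the two set inclusions separately, with the nontrivial direction relying crucially on the assumption that $\mathit{stb}(F)\neq\emptyset$.

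First, I would note that the inclusion $\mathit{stb}(F)\subseteq \mathit{para}(F)$ is already given by Theorem~\ref{thm:extended}, so no additional work is needed there. The real content is the reverse inclusion $\mathit{para}(F)\subseteq \mathit{stb}(F)$, which I would establish as follows. Pick any stable extension $A^{*}$ (it exists by hypothesis). By Proposition~\ref{thm:properties}(3), $\emptyset$ is a stabilizer of $A^{*}$, so $\emptyset\in\Sigma_{F}$. Since $\emptyset$ is contained in every subset of $Ar$, it is the unique $\subseteq$-minimal element of $\Sigma_{F}$.

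Now let $A$ be any paracoherent extension. By definition, $A$ admits some stabilizer $S$ that is minimal in $\Sigma_{F}$ with respect to $\subseteq$. Combined with the preceding observation, this forces $S=\emptyset$. Thus $\emptyset$ is a stabilizer of $A$, and applying Proposition~\ref{thm:properties}(3) once more yields that $A$ is stable, i.e.\ $A\in \mathit{stb}(F)$. This gives the desired inclusion, and together with Theorem~\ref{thm:extended} proves the equality.

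There is no real obstacle here: the argument is a two-line consequence of the fact that $\emptyset$ is the unique $\subseteq$-minimum of $\Sigma_{F}$ whenever stability is attainable, together with the ``$\emptyset$ stabilizer $\Leftrightarrow$ stable'' characterization from Proposition~\ref{thm:properties}(3). The only subtlety worth being explicit about in the write-up is that the minimality in the definition of paracoherent extension is taken over \emph{all} of $\Sigma_{F}$ (not just over stabilizers of the particular set $A$), which is exactly what makes the argument go through.
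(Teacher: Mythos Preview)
Your proof is correct and follows essentially the same route as the paper: invoke Theorem~\ref{thm:extended} for $\mathit{stb}(F)\subseteq\mathit{para}(F)$, then use Proposition~\ref{thm:properties}(3) to put $\emptyset$ into $\Sigma_F$, conclude it is the unique $\subseteq$-minimal stabilizer, and apply Proposition~\ref{thm:properties}(3) once more to force any paracoherent extension to be stable. Your explicit remark that minimality is taken over all of $\Sigma_F$ (rather than only over stabilizers of the given $A$) is a useful clarification that the paper leaves implicit.
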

\begin{proof}
Assume that $\mathit{stb}(F) \neq \emptyset$.
Hence, there is a stable extension $A$. Thus, by Theorem~\ref{thm:extended}, $A$ is a paracoherent extension.
Now, let $A'$ be a paracoherent extension. Since by assumption $A$ is a stable extension, we know, by Proposition~\ref{thm:properties}(3), that $\emptyset$ is a stabilizer of $A$. Hence, $\emptyset\in\Sigma_F$. Thus, $\emptyset$ is the unique minimal stabilizer. Therefore, it is also the minimal stabilizer of $A'$.
Hence, again by Proposition~\ref{thm:properties}(3), $A'$ is a stable extension.
\end{proof}

However, paracoherent semantics differs from both semi-stable semantics and stage semantics.
\begin{example}\label{ex:para-sem-stage}
Let $F$ be the AF reported in Figure~\ref{fig:stage-semi}. 
In Example~\ref{Ex:Stage-Semi}, we have seen that
$\mathit{sem}(F)=\{\{a,d\}\}$, and $\mathit{stage}(F)$ $=$ $\{\{a,c,e\},$ $\{a,c,g\},$ $\{a,d,g\}\}$.
Now, it can be checked that
$\mathit{para}(F)$ $=$ $\{\{a,d\},$ $\{a,c,e\},$ $\{a,c,g\},$ $\{a,d,g\}\}$.
\end{example}
As it will be clearer in Section~\ref{sec:related} paracoherent, semi-stable and stage are actually incomparable.

In Figure~\ref{fig:taxonomy}, a taxonomy of the argumentation semantics cited in the paper is reported. An arrow from a semantics $\sigma$ to a semantics $\sigma'$ means that $\sigma(F)\subseteq \sigma'(F)$, for each AF $F$, and there is an AF $F'$ such that $\sigma(F)\subset \sigma'(F)$.
Note that the paracoherent semantics is not admissible-based.
E.g., the paracoherent extension $\{a,e\}$ of the Example~\ref{ex:paraExtension} is not admissible, as the only admissible set of $F$ is the empty set. 
Note that, the non-admissibility property is also shared by the \textit{stage} semantics.

\begin{figure}
\centering
{
\begin{tikzpicture}[scale=1.0, transform shape]
\tikzstyle{m}=[circle, thin, minimum size=8mm,inner sep=0pt]

\node[] (a) at (0:0)   {$\mathit{cf}$};
\node[] (b) at (0:1.5) {$\mathit{adm}$};
\node[] (c) at (0:3)   {$\mathit{cmp}$};
\node[] (d) at (0:4.5) {$\mathit{sem}$};
\node[] (e) at (0:6)   {$\mathit{stb}$};
\node[] (f) at (9:4.5)   {$\mathit{stage}$};
\node[] (g) at (-9:4.5)  {$\mathit{para}$};

\path[->, >=latex] (b)	edge node {} (a)
					(c)	edge node {} (b)
					(d)	edge node {} (c)
					(e)	edge node {} (d)
					(f)	edge [bend right=13] node {} (a)
					(e)	edge node {} (f)
					(g)	edge [bend left=13] node {} (a)
					(e)	edge node {} (g);
\end{tikzpicture}
}
\caption{Taxonomy of some argumentation semantics.}
\label{fig:taxonomy}
\end{figure}
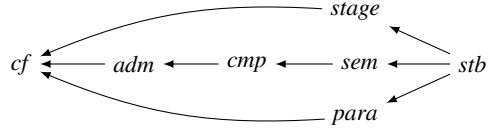

\section{Relation with Logic Programming}
In this section, we study the relation of our semantics for AFs with paracoherent semantics for logic programs.
In particular, we show that paracoherent extensions of an AF $F$ correspond to paracoherent answer sets of a logic program associated to $F$~\cite{DBLP:journals/sLogica/WuCG09}.

\begin{definition}
Let $F=(Ar,att)$ be an AF. For each argument $a\in Ar$, we build a rule $r_a$ such that $H(r_a)=\{a\}$, $B^{+}(r_a)=\emptyset$, and $B^{-}(r_a)=\{c\in Ar\mid (c,a)\in att\}$.
Then, we define $P_F$ as the set 
$\{r_a \mid a\in Ar\}$.
\end{definition}

\begin{example}\label{ex:PF}
Consider the AF $F$ reported in Figure~\ref{fig:stage-semi}. Its corresponding logic program is
$P_{AF}=\{a;\ 
b \leftarrow \naf a;\
c \leftarrow \naf b, \ \naf d;\
d \leftarrow \naf c;\
e \leftarrow \naf d, \ \naf g;\
f \leftarrow \naf e, \ \naf f;\
g \leftarrow \naf f \}$.

\end{example}

Since there are two main paracoherent ASP semantics in logic programming, semi-stable model semantics and semi-equilibrium model semantics (see Preliminaries),
we start to show that on logic programs having the particular structure of $P_F$, the two semantics coincide. 


\begin{theorem}\label{th:sst=seq}
Let $P_F$ be the logic program corresponding to an AF $F$.
Then, $\mathit{SST}^{\bf t}(P_F)=\mathit{SEQ}^{\bf t} (P_F)$.
\end{theorem}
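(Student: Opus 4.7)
The plan is to exploit the rigid syntactic form of $P_F$: every rule $r_a$ has empty positive body and a single head atom $a$. Consequently, neither $P_F^\kappa$ nor $\toht{P_F}$ contains default negation, so their answer sets coincide with their minimal models. Moreover, the type-(\ref{eq:ep-6}) rule $Ka\vee Kc_1 \vee \cdots \vee Kc_n \leftarrow$ associated to $r_a$ is entailed by rules (\ref{eq:ep-1}), (\ref{eq:ep-2}) and (\ref{eq:ep-5})---since $\lambda_{r_a,1}$ forces $a$ which in turn forces $Ka$---so $\toht{P_F}$ has the same minimal models as $P_F^\kappa \cup \{Ka \leftarrow a \mid a \in \sig\}$. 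The proof therefore reduces to understanding how adding the persistence axioms $Ka\leftarrow a$ changes the minimal models and their gaps.

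I would next characterise each minimal model $M$ of $P_F^\kappa$ by the pair $(X, K)$ where $X = M \cap \sig$ and $K = \{c \in \sig \mid Kc \in M\}$. Rules (\ref{eq:ep-1})--(\ref{eq:ep-3}) force $X$ to be conflict-free and to contain every argument with no attacker, and force $K$ to hit $\{c\mid (c,a')\in att\}$ for every $a' \in Ar\setminus X$; the gap is then $\gap(M) = \{Kc \mid c \in K \setminus X\}$. The main technical step, and the main obstacle I foresee, is a minimality lemma: $K$ contains no attacker of any $a \in X$. Otherwise, if some $c \in K$ attacks some $a \in X$, then $M\setminus\{a, \lambda_{r_a,1}\}$ remains a model of $P_F^\kappa$---rule (\ref{eq:ep-1}) for $r_a$ is now witnessed by $Kc$ and every other rule stays satisfied---contradicting the minimality of $M$. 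A companion lemma, by a swap argument, shows that for maximal canonical $M$ the gap $K\setminus X$ is a minimal hitting set of those attacker-sets $\{c\mid(c,a')\in att\}$ with $a'\in Ar\setminus X$ having no attacker in $X$, drawn from $Ar\setminus X$ minus the attackers of $X$.

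Parallel reasoning for $\toht{P_F}$ yields the same pair characterisation, with the extra constraint $X \subseteq K$ forced by rule (\ref{eq:ep-5}), while the minimality lemma still applies. Crucially, the $\subseteq$-minimal achievable gaps coincide under the two transformations: both are exactly the minimal hitting sets described above. One direction uses the map $M\mapsto M\cup\{Ka\mid a \in X\}$, which preserves projection and gap and lands in $\mc(\AS(\toht{P_F}))$ thanks to the minimality lemma; the other direction converts an answer set of $\toht{P_F}$ with gap $K'$ into an answer set of $P_F^\kappa$ with $K = K_X \cup K'$, where $K_X \subseteq X$ is a minimal choice of $X$-attackers covering those $a' \in Ar \setminus X$ not already hit by $K'$. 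Since maximal canonicity depends only on $\subseteq$-minimality of the gap, and the attainable (projection, gap) pairs agree under both programs, the $\sig$-projections of $\mc(\AS(P_F^\kappa))$ and $\mc(\AS(\toht{P_F}))$ agree, which is exactly $\SST^{\bf t}(P_F) = \SEQ^{\bf t}(P_F)$.
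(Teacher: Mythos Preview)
Your approach is essentially the paper's, only spelled out in more detail. Both arguments exploit the special shape of $P_F$ to see that $P_F^\kappa$ and $\toht{P_F}$ are positive (answer sets $=$ minimal models), that rule~(\ref{eq:ep-6}) is redundant, and hence that $\toht{P_F}$ is just $P_F^\kappa$ plus the persistence rules $Ka\la a$; both then finish by exhibiting gap-- and $\Sigma$--projection--preserving maps between the minimal models of the two programs. The paper simplifies one step further than you do, collapsing each $\lambda_{r_a,1}$ into $a$ (legitimate because every argument heads exactly one rule), so that (\ref{eq:ep-1})--(\ref{eq:ep-3}) become simply $a\vee Kc_1\vee\cdots\vee Kc_n$ and $\la a,c_j$.

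Your explicit minimality lemma---no $c\in K$ attacks any $a\in X$---is exactly the content the paper hides behind ``it is easy to see''. In fact the paper's literal claim that the map $M\mapsto M\cup\{Ka\mid a\in M\}$ sends \emph{every} minimal model of $P_F^\kappa$ to a minimal model of $\toht{P_F}$ is not quite right: for a non--max-canonical $M$ the added $Ka$'s can make some gap atom $Kc$ redundant, so the image need not be minimal. Your restriction to $\mc(\AS(P_F^\kappa))$ together with the minimality lemma is precisely what rules this out (no attacker of $X$ lies in $K\cup X$, so no $a\in X$ can be dropped, and dropping a gap atom would yield a strictly smaller gap already in $P_F^\kappa$). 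The hitting-set description and the explicit reverse construction are more machinery than the paper uses, but they are correct and make the ``both maps preserve gap and true atoms'' step honest.
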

\begin{proof}
First, the epistemic $\kappa$-transformation of $P_F$, $P_F^\kappa$, has a simplified structure. As each rule $r$ in $P_F$ is normal and it has an empty positive body (i.e. $r$ is of the form $a\leftarrow \naf c_1,..., \naf c_n$), then
rule~(\ref{eq:ep-1}) becomes $\lambda_r \vee Kc_1 \vee ... \vee Kc_n$;   
rule~(\ref{eq:ep-2}) becomes $a\leftarrow \lambda_r$;
rule~(\ref{eq:ep-3}) becomes $\leftarrow \lambda_r, c_j$;
while rule~(\ref{eq:ep-4}) becomes irrelevant (it is $\lambda \leftarrow a,\lambda$), hence it can be removed.
Moreover, as in $P_F$ there is a unique rule having $a$ in the head, then rules~(\ref{eq:ep-1})-(\ref{eq:ep-2}) can be unified in the rule $a \vee Kc_1 \vee ... \vee Kc_n$, and rule~(\ref{eq:ep-3}) is equivalent to $\leftarrow a,c_j$. Thus, 
the program $P_F^{\kappa}$ is obtained by replacing each rule $r$, such that $\bodyn{r}\neq\emptyset$, with:
$(8)$ $a \vee Kc_1 \vee ... \vee Kc_n$ and 
$(9)$ $\la a, c_j$,   for $j=1,...,n$.
%
Concerning the epistemic $\HT$-transformation of $P_F$, $P_F^\HT$, rule~(\ref{eq:ep-6}) becomes $Ka \vee Kc_1 \vee ... \vee Kc_n$. However, this rule can be derived by rule~(8) and rule~(\ref{eq:ep-5}). Hence, it can be removed.   
Therefore, the program $P_F^{\HT}$ is obtained by
adding to $P_F^\kappa$, only rules of the form~(\ref{eq:ep-5}), i.e., $Ka \la a$. 
Now, it is easy to see that if $M$ is a minimal model of $P^\kappa_F$, then $M\cup\{Ka \mid a\in M \mbox{ and } Ka\not\in M\}$ is a minimal model of $P^\HT_F$; and if $M'$ is a minimal model of $P^\HT_F$, then there is $M\subseteq M'$, where eventually some $Ka$ atoms are removed, such that $M$ is a minimal model of $P^\kappa_F$.
As both maps do not change the gap of the models and the true atoms, then 
$\{ M\cap\sig \mid M\in\mc(\AS(P^\kappa_F))\}$ $=$
$\{ M\cap\sig \mid M\in\mc(\AS(P^\HT_F))\}$, i.e., $\mathit{SST}^{\bf t}(P_F)=\mathit{SEQ}^{\bf t} (P_F)$.
%
%
\end{proof}

Paracoherent extensions of an AF $F$ coincide with paracoherent answer sets of $P_F$.


\begin{theorem}\label{th:para=seq}
Let $P_F$ be the logic program corresponding to an AF $F$.
Then, $\mathit{para}(F)=\mathit{SEQ}^{\bf t}(P_F)$.
\end{theorem}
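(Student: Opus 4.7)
The plan is to establish an explicit bijection between answer sets of $P_F^\HT$ and ``stabilizer pairs'' $(A,S)$ for $F$, and then verify that maximal canonical answer sets are precisely the ones corresponding to pairs whose $S$ is $\subseteq$-minimal in $\Sigma_F$, i.e.\ to the paracoherent extensions. Throughout I rely on Theorem~\ref{th:sst=seq}, which lets me work with $P_F^\HT$, together with the simplified structure of $P_F^\HT$ derived in its proof: for each attacked argument $a$ with defeaters $c_1,\ldots,c_n$, the disjunctive rule $a\vee Kc_1\vee\cdots\vee Kc_n$ and the constraints $\leftarrow a,c_j$; for each unattacked argument, a fact; and the rule $Ka\leftarrow a$ for each $a$. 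Since $P_F^\HT$ has no default-negated literals, its answer sets coincide with its minimal models.

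To set up the bijection, given an interpretation $M$ I let $A_M:=M\cap Ar$ and $S_M:=\{a\in Ar\mid Ka\in M\}\setminus A_M$; conversely, for a pair $(A,S)$ with $A\cap S=\emptyset$ I let $M_{(A,S)}:=A\cup\{Ka\mid a\in A\cup S\}$. The first core claim is that, for every answer set $M$, $S_M$ is a stabilizer of $A_M$: conflict-freeness of $A_M$ follows from the constraints; the inclusion $Ar\setminus A_M\subseteq A_M^+\cup S_M^+$ follows because each disjunctive rule for $a\notin A_M$ forces some defeater of $a$ into $A_M\cup S_M$; and the crucial $S_M^+\cap A_M=\emptyset$ I would prove by contradiction---if some $s\in S_M$ attacked some $a\in A_M$, then $M\setminus\{a\}$ would still be a model (the rule $a\vee\cdots\vee Ks\vee\cdots$ is covered by $Ks$, every constraint involving $a$ and the rule $Ka\leftarrow a$ become vacuously satisfied, and no other rule changes), contradicting minimality of $M$.

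The second core claim is that, for any conflict-free $A$ with stabilizer $S$ (forcing $A\cap S=\emptyset$, since otherwise $S\setminus A$ is a strictly smaller stabilizer), $M_{(A,S)}$ is a model of $P_F^\HT$, and it is minimal (hence an answer set) iff $S$ is a $\subseteq$-minimal stabilizer of $A$. The minimality analysis splits by which atom of $M_{(A,S)}$ one might try to drop: no $a\in A$ can go, because $S^+\cap A=\emptyset$ and conflict-freeness of $A$ jointly forbid any defeater of $a$ in $A\cup S$, so the rule for $a$ would fail; $Ka$ with $a\in A$ is forced by $Ka\leftarrow a$; and $Ka$ with $a\in S$ can be dropped precisely when $S\setminus\{a\}$ is still a stabilizer of $A$. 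Combining the two claims, the answer sets of $P_F^\HT$ are exactly the $M_{(A,S)}$ with $S$ a locally minimal stabilizer of $A$, and $\gap(M_{(A,S)})=\{Ka\mid a\in S\}$.

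Finally, being maximal canonical amounts to having a $\subseteq$-minimum gap, which translates to $S$ being $\subseteq$-minimum among all locally minimal stabilizers. A short trimming argument---any $S'\in\Sigma_F$ stabilizing some $A'$ can be shrunk to a locally minimal stabilizer of $A'$ by iteratively removing redundant elements---shows that this collection of minima coincides with the $\subseteq$-minimal elements of $\Sigma_F$. Therefore $A\in\SEQ^{\bf t}(P_F)$ iff $A$ admits a $\subseteq$-minimal stabilizer in $\Sigma_F$, i.e.\ iff $A\in\mathit{para}(F)$. I expect the main obstacle to lie in the second core claim's minimality direction: the case analysis must carefully use both inclusions of the identity $A^+\cup S^+=Ar\setminus A$ to link local minimality of $S$ with minimality of $M_{(A,S)}$ without accidentally allowing a mixed deletion of $A$-elements and $K$-atoms that would bypass the stabilizer condition.
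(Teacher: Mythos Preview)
Your proof is correct and takes a genuinely different route from the paper's. The paper proceeds indirectly: it builds an auxiliary framework $F_s$ by adding, for each attacking argument $a$, a fresh ``shadow'' argument $sa$ carrying copies of $a$'s outgoing attacks; it then argues that $\mathit{para}(F)$ equals the $Ar$-projections of those stable extensions of $F_s$ that are $\subseteq$-minimal in the shadow part, and closes the loop by invoking the externally-supported (MES) characterisation of semi-stable models from~\cite{DBLP:conf/aaai/AmendolaD0R18} together with the classical $\mathit{stb}(F_s)=\AS(P_{F_s})$ correspondence. You bypass both the auxiliary framework and the external MES result: working directly with the simplified positive program $P_F^{\HT}$, you identify its minimal models with pairs $(A,S)$ where $S$ is a locally minimal stabilizer of $A$, and then match gap-minimality with global minimality in $\Sigma_F$ via the trimming observation. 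The paper's detour yields a second, purely graph-theoretic characterisation of $\mathit{para}(F)$ (through $\mathit{mstb}(F_s)$) that could be exploited by stable-extension solvers; your route is more self-contained and makes the stabilizer/gap correspondence transparent. Regarding the obstacle you flag, the ``mixed deletion'' worry dissolves once you note that \emph{every} model $M'\subseteq M_{(A,S)}$ must already contain all of $A$: dropping any $a\in A$ forces, via the disjunctive rule for $a$, some $Kc_j\in M'\subseteq M_{(A,S)}$, hence $c_j\in A\cup S$, placing $a$ in $(A\cup S)^+$, which is disjoint from $A$ by conflict-freeness and Proposition~\ref{thm:properties}(2). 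Thus $M'=M_{(A,S')}$ for some $S'\subseteq S$, and minimality of $M_{(A,S)}$ reduces exactly to local minimality of $S$, as you anticipated.
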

\begin{proof}[Proof Sketch]
Starting from $F = (Ar,att)$, we construct an AF $F_s=(Ar_s,att_s)$ as follows.
For each $a\in Ar$, such that $(a,b)\in att$ for some $b\in Ar$, we consider a new argument $sa$, and a new attack $(sa,b)$.
So that, $Ar_s = Ar\cup \{sa\mid \exists b\in Ar\mbox{, s.t. }(a,b)\in att\}$, and $att_s = att\cup\{(sa,b)\mid \exists b\in Ar\mbox{, s.t. }(a,b)\in att\}$.
Now, we compute the stable extensions of $F_s$.
Then, we consider the minimal stable extensions with respect to the subset inclusion with respect to the arguments in $Ar_s\setminus Ar$, i.e., the set of all $A\in \mathit{stb}(F_s)$ for which there is no $A'\in\mathit{stb}(F_s)$ such that $A'\cap(Ar_s\setminus Ar)$ $\subset$ $A\cap(Ar_s\setminus Ar)$. We denote by $\mathit{mstb}(F_s)$ the set of all minimal stable extensions of $F_s$.
Finally, by filtering the new atoms, we can prove that what we obtain is exactly the set of paracoherent extensions of $F$, i.e., $\mathit{para}(F)=\{A\cap Ar\mid A\in \mathit{mstb}(F_s)\}$.

To conclude the proof, we note that the stable extensions of $F_s$ coincide with the answer sets of $P_{F_s}$, i.e., $\mathit{stb}(F_s)=\AS(P_{F_s})$~\cite{DBLP:journals/ai/Dung95}.
Now, in~\cite{DBLP:conf/aaai/AmendolaD0R18} has been introduced a semantical characterization for semi-stable models in terms of minimal externally supported (MES) models, by replacing the $\kappa$-transformation of $P$ with the so-called externally supported program of $P$.
That is, rules of the form~(\ref{eq:ep-1})-(\ref{eq:ep-4}) are replaced by rule
$a_1 \vee\ldots\vee a_l \leftarrow b_1,\ldots,b_m, \naf c_1,\ldots,\naf c_n,\naf sc_1, \ldots, \naf sc_n$, and then choice rules of the form $\{sc_j\}$ are used to minimize the number of atoms of the form $sc_j$. 
Since we are dealing with logic programs of the form of $P_F$, the rule above becomes 
$a \la \naf c_1,\ldots,\naf c_n,\naf sc_1, \ldots, \naf sc_n$,
that is exactly what the translation from $F$ to $F_s$ has done before.
Therefore, it is easy to check that
$\mathit{MES}(P_F)$ $=$ 
$\{M\in \AS(P_{F_s})\mid 
\nexists M'\in
\AS(P_{F_s})
\mbox{ s.t. } s(M')\subset s(M) \}$ 
$=$ $\mathit{mstb}(F_s)$.
Moreover, by Theorems~3-4 in~\cite{DBLP:conf/aaai/AmendolaD0R18}, we conclude that 
$\mathit{SST}^{\bf t}(P_F) = \mathit{MES}^{\bf t}(P_F)$, where $\mathit{MES}^{\bf t}(P_F)=\{M\cap\Sigma\mid M\in \mathit{MES}(P_F)\}$.
Hence, $\mathit{SST}^{\bf t}(P_F) = \{A\cap Ar\mid A\in \mathit{mstb}(F_s)\} = \mathit{para}(F)$. 
\end{proof}

\begin{example}\label{ex:para=seq}
Consider again the AF $F$ reported in Figure~\ref{fig:stage-semi}. 
The paracoherent answer sets of $P_F$ (see Example~\ref{ex:PF}) are $\{a, c, e, Kf\}$, $\{a, c, g, Ke\}$,
$\{a, d, g, Ke\}$, and $\{a, d, Kf\}$.
Hence, $\mathit{SEQ}^{\bf t}(P_F)=\{\{a,c,e\},$ $\{a,c,d\},\{a,d,g\},\{a,d\}\}$, that is equal to $\mathit{para}(F)$, as shown in the Example~\ref{ex:para-sem-stage}. 
\end{example}

Note that, as it happens in the case of the relationship between the semi-stable and L-stable semantics~\cite{DBLP:journals/ijar/CaminadaSAD15},
if we encode a logic program $P$ into an AF $F_P$ using the association presented in~\cite{DBLP:journals/ijar/CaminadaSAD15}, it is not guaranteed that
the paracoherent extensions of $F_P$ can be mapped one to one with the paracoherent answer sets of $P$.
E.g., if we consider the program $P = \{ b \leftarrow \naf\ a;\ c \leftarrow b, \naf\ c \}$,
the paracoherent answer sets of $P$ are $\{Ka\}$ and $\{b, Kc\}$, whereas  $F_P$ has only one paracoherent extension corresponding to $\{b, Kc\}$.


\section{Computational Complexity}
In this section we study computational complexity issues.
In particular, we prove that the credulous and skeptical reasoning tasks for the paracoherent argumentation semantics remain the same as for semi-stable and stage semantics.


\begin{theorem}
For paracoherent semantics,
credulous reasoning is $\Sigma_2^P$-complete, and
skeptical reasoning is $\Pi_2^P$-complete.
\end{theorem}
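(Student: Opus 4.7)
The plan is to prove $\Sigma_2^P$-membership (resp. $\Pi_2^P$-membership) for credulous (resp. skeptical) reasoning, matched by the corresponding hardness. The cleanest membership argument uses the characterization developed inside the proof of Theorem~\ref{th:para=seq}, which identifies a paracoherent extension of $F$ with the projection onto $Ar$ of a stable extension of the augmented AF $F_s$ that is $\subseteq$-minimal on $Ar_s\setminus Ar$. A $\Sigma_2^P$-witness for credulous acceptance of argument $a$ is then a stable extension $A$ of $F_s$ with $a\in A$: stability is checked in polynomial time, and a single NP oracle call decides that no stable extension $A'$ of $F_s$ satisfies $A'\cap(Ar_s\setminus Ar)\subsetneq A\cap(Ar_s\setminus Ar)$. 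Skeptical reasoning is the complement and thus lies in $\Pi_2^P$. Equivalently, one can work directly from the stabilizer definition: guess $(A,S)$ with $a\in A$ and $A^+\cup S^+ = Ar\setminus A$, and use the oracle to exclude the existence of a smaller witness $(A',S')$ with $S'\subsetneq S$ and $S'\in\Sigma_F$.

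For the hardness, I would adapt the QBF-based reduction used for semi-stable argumentation in~\cite{DBLP:conf/jelia/DunneC08}. Given an instance $\exists X\,\forall Y\,\varphi(X,Y)$ of $\Sigma_2^P$-QBF, the construction comprises (i) a variable-choice gadget producing mutually attacking argument pairs for every propositional variable; (ii) a clause-checker sub-AF encoding $\varphi$; and (iii) an odd-loop gadget that prevents genuine stable extensions while ensuring that the empty set is a stabilizer precisely for those $X$-assignments that make $\forall Y\,\varphi$ true, so that any falsifying assignment forces a strictly larger minimal stabilizer. A designated argument is then credulously accepted iff the QBF is true; the $\Pi_2^P$-hardness of skeptical reasoning follows by the symmetric reduction from $\forall X\,\exists Y\,\varphi$.

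The delicate point, and the expected main obstacle, is calibrating gadget (iii): stabilizer minimization behaves combinatorially differently from the admissibility-based maximization underlying semi-stable and stage extensions, so the $\forall Y$-encoding of~\cite{DBLP:conf/jelia/DunneC08} cannot be reused verbatim. A safer bypass is to lift hardness through Theorem~\ref{th:para=seq}: start from a normal program $P$ witnessing $\Sigma_2^P$-hardness of credulous $\mathit{SEQ}$-reasoning, normalize it so that each atom appears in the head of a single rule with empty positive body (a routine transformation with fresh auxiliary atoms that preserves semi-equilibrium answers on the target atom), and observe that the resulting program coincides with $P_F$ for an AF $F$ of polynomial size; the hardness then transfers to $\mathit{para}(F)$ via Theorem~\ref{th:para=seq}.
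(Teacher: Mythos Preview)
Your proposal follows essentially the same route as the paper: both derive membership and hardness from Theorem~\ref{th:para=seq} together with the known $\Sigma_2^P$/$\Pi_2^P$ complexity of brave/cautious reasoning under semi-equilibrium semantics for normal programs~\cite{DBLP:conf/kr/EiterFM10}. Your membership argument is in fact more explicit than the paper's, which simply invokes the correspondence and cites the existing bounds; your guess-and-check with an NP oracle on the $F_s$ characterization is a correct direct account of the same upper bound.

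The one point of divergence, and the only place where your sketch is under-justified, is the hardness transfer. You propose to take an arbitrary normal program witnessing $\Sigma_2^P$-hardness for $\SEQ$ and then ``normalize'' it into the shape of some $P_F$ (a single rule per head atom, empty positive body) via a ``routine transformation with fresh auxiliary atoms''. This step is not obviously routine: eliminating positive body literals and collapsing multiple head occurrences while preserving semi-equilibrium models---even only on the distinguished atom---interacts with gap minimization in non-trivial ways, and you give no argument for why it goes through. The paper takes a different, safer move: it appeals directly to the specific hardness construction of Theorem~10 and Appendix~C1 in~\cite{DBLP:conf/kr/EiterFM10} and observes (by inspection) that this construction already establishes hardness for the restricted class of programs of the form $P_F$, so no normalization is required. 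If you keep your version, you should either carry out the normalization explicitly and verify that $\SEQ$ is preserved on the target atom, or, as the paper does, inspect the concrete reduction in~\cite{DBLP:conf/kr/EiterFM10} and confirm it applies to the restricted program class.
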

\begin{proof}[Proof Sketch]
The memberships are a corollary of the Theorem~\ref{th:para=seq}. Indeed, credulous and skeptical reasoning for paracoherent semantics coincide with brave and cautious reasoning for semi-equilibrium model semantics, resp.; and it is well-known that for semi-equilibrium semantics on normal logic programs,
credulous (brave) reasoning is $\Sigma_2^P$-complete, and
skeptical (cautious) reasoning is $\Pi_2^P$-complete~\cite{DBLP:conf/kr/EiterFM10}.
Concerning the hardness part, it can be checked that the proof of the hardness for credulous (brave) and skeptical (cautious) reasoning for semi-equilibrium model semantics in case of normal logic programs (see Theorem 10 and Appendix C1 in~\cite{DBLP:conf/kr/EiterFM10}) can be directly used to prove that also for normal logic programs with an empty positive body and a single rule for each head atom (as it is $P_F$) hardness results remain unchanged.
\end{proof}

Finally, it is worthy to note that credulous and skeptical reasoning tasks associated with paracoherent argumentation semantics satisfy the criteria identified by~\cite{DBLP:conf/birthday/GagglRT14}, being at the second level of the polynomial hierarchy.

\section{Related Work}\label{sec:related}
In this section we review related works by first mentioning the relation with paracoherent semantics and, then, comparing in detail with alternative argumentation semantics.

\subsection{Paracoherent semantics}
The two major paracoherent semantics for logic programs are the semi-stable~\cite{DBLP:journals/logcom/SakamaI95} and the semi-equilibrium semantics~\cite{DBLP:journals/ai/AmendolaEFLM16}.
These semantics emerged over several alternative proposals~%
\cite{DBLP:journals/ngc/Przymusinski91,DBLP:journals/jacm/GelderRS91,DBLP:conf/lpnmr/SaccaZ91,DBLP:journals/jcss/YouY94,DBLP:journals/amai/EiterLS97,DBLP:conf/lpkr/Seipel97,Balduccini03logicprograms,pere-pint-95,pere-pint-07,DBLP:journals/japll/AlcantaraDP05,DBLP:journals/logcom/GalindoRC08}.
However, \cite{DBLP:journals/ai/AmendolaEFLM16} have shown that only semi-stable semantics~\cite{DBLP:journals/logcom/SakamaI95} and semi-equilibrium semantics~\cite{DBLP:journals/ai/AmendolaEFLM16}  satisfy all the following five highly desirable --from the knowledge representation point of view-- theoretical properties:
(i) every consistent answer set of a program corresponds to a paracoherent answer set (\textit{answer set coverage});
(ii) if a program has some (consistent) answer set, then its paracoherent answer sets correspond to answer sets (\textit{congruence});
(iii) if a program has a classical model, then it has a paracoherent answer set (\textit{classical coherence});
(iv) a minimal set of atoms should be undefined (\textit{minimal undefinedness});
(v) every true atom must be derived from the program (\textit{justifiability or foundedness}).
The first two properties ensure that the notions of answer sets and paracoherent answer sets should coincide for coherent programs; the third states that paracoherent answer sets should exist whenever the program admits a (classical) model; the last two state that the number of undefined atoms should be minimized,
and every true atom should be derived from the program, respectively.
The partial evidential stable models of~\cite{DBLP:conf/lpkr/Seipel97} are known to be equivalent to semi-equilibrium ones~\cite{DBLP:journals/ai/AmendolaEFLM16}. 
An alternative characterization of semi-stable and the semi-equilibrium semantics based on the concept of \textit{externally supported atoms} was given in~\cite{DBLP:conf/aaai/AmendolaD0R18}, that demonstrated to be amenable to obtain efficient implementations.

\subsection{Comparison with Argumentation Semantics}

In this section, we compare paracoherent semantics with alternative semantics for AFs.
First, we compare paracoherent semantics with semi-stable semantics with respect to some basic features of admissible-based semantics. Then, we compare paracoherent semantics with stage semantics by focusing on the behaviour on unattacked arguments; and we give an extensive comparison with many argumentation semantics by focusing on a specific desirable behaviour that paracoherent semantics exhibits, and other semantics do not.
Finally, we provide a discussion concerning the connection with some other studies on the existence of stable extensions.

\subsubsection{Cycles and non-admissibility: Paracoherent Semantics vs Semi-stable Semantics}
Great attention to the problem of loops and cycles has been given both in argumentation \cite{DBLP:journals/logcom/Bench-Capon16,DBLP:journals/logcom/Gabbay16a,DBLP:journals/logcom/DvorakG16,DBLP:journals/logcom/Arieli16,DBLP:journals/logcom/BodanzaTS16}, and in answer set programming~\cite{DBLP:conf/iclp/LeeL03,DBLP:conf/aaai/LinZ04,DBLP:journals/tplp/CostantiniP05,DBLP:journals/tplp/GebserLL11}.
An AF described by an odd-length cycle admits the empty admissible set only. Hence, admissible-based semantics, such as complete, grounded, preferred, semi-stable, stable, ideal~\cite{DBLP:journals/ai/DungMT07}, eager~\cite{conf/bnaic/Caminada07}, and resolution-based grounded~\cite{DBLP:journals/ai/BaroniDG11} can (eventually) admit the empty set as the unique solution. This is the case, in particular, for semi-stable semantics. 
However, in case of AFs described by even-length cycles several extended solution are possible. 
For instance, if $F=(\{a,b,c,d\},$ $\{(a,b),$ $(b,c),(c,d),(d,a)\})$ (the $4$-length cycle), we have two stable extensions, $\{a,c\}$ and $\{b,d\}$, that are also semi-stable extensions.

The paracoherent semantics is non-admissible, unlike semi-stable. 
In many situations, non-admissible semantics allow one to have solutions that are not empty. 
This is a desirable behavior in practice as noted in~\cite{Verheij96twoapproaches}. 
Indeed, it is known that a small initial incoherence might prevents to draw interesting conclusions with the semi-stable semantics. Intuitively, this happens when the argumentation framework ``starts'' with an odd-cycle (i.e., an odd-cycle appears in a strongly connected component of the graph and no argument of this component is attacked by others).

For example, consider the following argumentation framework:
$$F=(\{a,b,c,d,e\}, \{(a,a),(a,b),(b,c),(c,d),(d,e)\}).$$
The starting loop on the node $a$ avoids to have a non-empty admissible solution. Hence, the only semi-stable extension is the empty set.
On the other hand, the paracoherent approach allows to obtain solutions also in case of ``initial incoherence''. 
For instance, in this example, we have $\{c,e\}$ as (the unique) paracoherent solution.

The paracoherent semantics, like others non admissible-based semantics (such as stage), 
guarantees the relevant feature of ensuring a ``symmetric'' treatment of odd- and even-length cycles~\cite{DBLP:journals/ai/BaroniGG05a,DBLP:journals/ai/BaroniG07,DBLP:journals/logcom/DvorakG16}. 
A symmetric behavior is strictly connected with the admissibility property, that guarantees to each argument in a conflict-free solution to be defended by another argument. This cannot happen when we are in the presence of odd cycles. Hence, the unique solution must be the empty one, i.e., we suspend any judgment in the presence of inconsistencies (as in semi-stable). However, in real scenarios this might not suffice. Consider for example a person charged with a crime. In practice, it will be punished or not. A decision will however be made. In such circumstances, we need to keep the argumentation system capable of providing plausible solutions.
For these reasons, we believe that it is necessary to resort to non-admissible semantics. However, to minimize "non-admissibility" it is necessary to keep "as close as possible" to admissible solutions, and, in particular, to maintain a similar/symmetrical behavior to that of stable solutions.

It is not a case that non-admissible semantics can be considered a natural way of evaluating argumentation frameworks. This was evidenced by a recent empirical study concluding that non-admissible semantics {``were the best predictors of human argument evaluation''}~\cite{DBLP:conf/jelia/CramerG19}.

\subsubsection{Unattacked arguments: Paracoherent Semantics vs Stage Semantics}
Paracoherent semantics provides a better behaviour in case of ``unattacked arguments'' (see,~\cite{DBLP:journals/logcom/CaminadaCD12}) when compared with the stage semantics.

For example, consider the following argumentation framework:
$$F=(\{a,b,c,d,e\}, \{(a,b), (b,c),(c,c), (c,d), (d,e)\}),$$ where argument $a$ attacks $b$; $b$ attacks $c$; $c$ attacks itself and $d$; and $d$ attacks $e$. Note that, in particular, $a$ is attacked by no argument. 
The argumentation framework $F$ has two stage extensions, namely $A_1 = \{a,d\}$ and $A_2 = \{b,d\}$. 
Indeed, $A_1\cup A_1^+ =\{a,b,d,e\}$ and $A_2\cup A_2^+=\{b,c,d,e\}$, and it can be easily checked that they are maximal with respect to conflict-free sets.

Now, intuitively, it is very strange that the argument $a$ is false in some extension, namely $A_2$, because $a$ is an argument attacked by no other argument. 
There is no reason to consider $a$ false. 
Hence, one expects to see $a$ as true in each extension, that is $a$ should be a skeptical argument.

This expected behaviour is maintained by paracoherent extensions. Indeed,
it can be checked that $B_1 = \{a,d\}$ and $B_2 = \{a,e\}$ are all the paracoherent extensions. 
Thus, $a$ is a skeptical argument as expected.

\subsubsection{A distinguishing feature of Paracoherent Semantics: The Symmetric Behaviour}

We now provide an extensive comparison between paracoherent semantics and several argumentation semantics. 
We discuss a distinguishing feature of the paracoherent semantics: its symmetric behaviour.

\begin{figure}
\centering 
\subfigure[$5$-radial star polygon]
{
\begin{tikzpicture}[scale=1.0, transform shape]
\def \n {5}
\def \radius {0.8cm}
\def \radiusExt {1.6cm}
\def \margin {7} 
\def \angle {-18}

\node[] (c) at (0:0) {$c$};

\foreach \s in {1,...,\n}
{
  \node[] (b\s) at ({360/\n * (\s )+\angle}:\radius) {$b_\s$};

  \path[->, >=latex] ({360/\n * (\s + 0.5)+\angle+\margin}:\radiusExt) 
    edge ({360/\n * (\s + 1.5)+\angle-\margin}:\radiusExt);

  \node[] (a\s) at ({360/\n * (\s + 0.5)+\angle}:\radiusExt) {$a_\s$};

  \path[->, >=latex]	({360/\n * (\s - 0.5)+\angle}:\radiusExt-0.2cm)  edge ({360/\n * (\s - 0.8)+\angle}:0.2cm+\radius);
  \path[->, >=latex]	({360/\n * (\s - 0.5)+\angle}:\radiusExt-0.2cm)  edge ({360/\n * (\s - 0.2)+\angle}:0.2cm+\radius);

  \path[->, >=latex]	({360/\n * (\s - 1)+\angle}:-0.2cm+\radius)  edge (c);
}
\end{tikzpicture}
}
\hspace*{1.5cm}
\subfigure[$6$-radial star polygon]
{
\begin{tikzpicture}[scale=1.0, transform shape]
\def \n {6}
\def \radius {0.8cm}
\def \radiusExt {1.6cm}
\def \margin {7} 

\node[] (c) at (0:0) {$c$};

\foreach \s in {1,...,\n}
{
  \node[] (b\s) at ({360/\n * (\s )}:\radius) {$b_\s$};

  \path[->, >=latex] ({360/\n * (\s + 0.5)+\margin}:\radiusExt) 
    edge ({360/\n * (\s + 1.5)-\margin}:\radiusExt);

  \node[] (a\s) at ({360/\n * (\s + 0.5)}:\radiusExt) {$a_\s$};

  \path[->, >=latex]	({360/\n * (\s - 0.5)}:\radiusExt-0.2cm)  edge ({360/\n * (\s - 0.8)}:0.2cm+\radius);
  \path[->, >=latex]	({360/\n * (\s - 0.5)}:\radiusExt-0.2cm)  edge ({360/\n * (\s - 0.2)}:0.2cm+\radius);

  \path[->, >=latex]	({360/\n * (\s - 1)}:-0.2cm+\radius)  edge (c);
}
\end{tikzpicture}
}
\caption{A case of symmetric attacks.}
\label{fig:symm}
\end{figure}
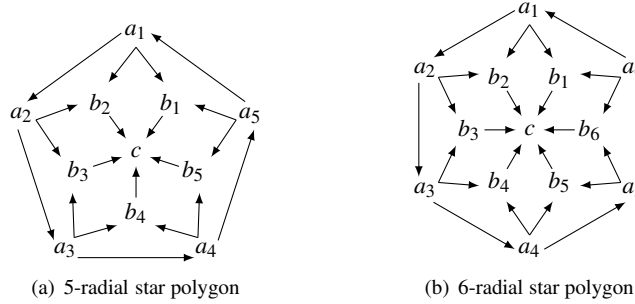

Consider the AF $F_n=(Ar,att)$, where
$Ar=\{a_1,...,a_n,b_1,...,b_n,c\}$, and
$att$ is formed by
the cycle of length $n$ given by $(a_i,a_{i+1})$, for $i=1,...,n-1$, and $(a_n,a_1)$;
the pairs of attacks $(a_i,b_i)$, $(a_i,b_{i+1})$, for  $i=1,...,n-1$, and
$(a_n,b_n)$, $(a_n,b_1)$;
and finally $(b_i,c)$, for $i=1,...,n$.
Intuitively, $a_1$, ..., $a_n$ form a cycle of attacks; each $a_i$ attacks two arguments subscripted as consecutive, $b_i$ and $b_{i+1}$, except for the last $a_n$ which attacks $b_n$ and $b_1$, by completing a sort of cyclic attack; and each $b_i$ attacks the argument $c$. 
We will call this graph the $n$-\textit{radial star polygon}.
Figure~\ref{fig:symm} reports what is happening in case of $n=5$ and $n=6$.

In case of $n$ is even, we obtain two stable extensions:
$\{a_1,a_3,...,a_{n-1},c\}$ and 
$\{a_2,a_4,...,a_{n},c\}$. Note that $c$, in particular, is always inferred, so it is a skeptical argument.
However, as well-known, in case of $n$ is odd, the unique admissible set is the empty one, whereas no stable extension exists. We stress that no admissible-based semantics can have a symmetric behaviour with respect to cycles.
However, also non admissible-based semantics have this issue with cycles. This happens, for instance, for 
$cf2$~\cite{DBLP:journals/ai/BaroniGG05a,DBLP:journals/logcom/GagglW13} argumentation semantics, as shown by~\cite{DBLP:journals/logcom/DvorakG16}. In our symmetric example, for $cf2$, the extensions of $F_5$ are $\{a_1,a_3,b_5\}$, $\{a_2,a_4,b_1\}$, $\{a_3,a_5,b_2\}$, $\{a_4,a_1,b_3\}$, and $\{a_5,a_2,b_4\}$, thus $c$ does not belong to any of them.
Nonetheless, among the $cf2$ extensions of $F_6$ there are $\{a_1,a_3,a_5,c\},$ $\{a_2,a_4,a_6,c\},$ $\{a_1,a_4,b_3,b_6\},$ $\{a_2,a_5,b_4,b_1\},$ $\{a_3,a_6,b_5,b_2\}$, hence $c$ is not skeptically but credulously accepted, thus resulting in a different behavior for even and odd radial star polygons.

This non-symmetric behaviour happens also for the stage semantics. 
Since the stage semantics is a generalization of the stable semantics, and, whenever a stable extension exists, $c$ is a skeptical argument, then, to fulfill a symmetric behaviour, the stage semantics should guarantee the skeptical acceptance of $c$.
However, just considering $n=3$, the stage extensions will be
$\{a_1,b_3\}$, $\{a_2,b_1\}$, and $\{a_3,b_2\}$. Thus, $c$ is not even a credulous argument. 
Note that this holds also for extensions of stage semantics such as the \textit{stagle}~\cite{DBLP:conf/comma/BaumannLW16}.

Therefore,
admissible, complete, grounded, ideal, preferred, stable, semi-stable, eager, resolution-based grounded, stage, and cf2  semantics do not have a symmetric behaviour w.r.t. radial star polygons.
While, paracoherent semantics on $n$-radial star polygon infers $c$ as skeptical argument, for each $n$.
In particular, whenever $n$ is odd, the set of the paracoherent extensions of $F_n$ is formed by $A_i=\{c\}$ $\cup$ $\{a_{i-2h\hspace*{-0.1cm}\pmod n}: h=0,1,...,\frac{n-3}{2}\}$, for each $i=1,...,n$. 

\subsubsection{Further studies  on the non-existence of stable extensions}

As a final observation, we note that the reasons for non-existence of stable extensions has been recently investigated in~\cite{DBLP:journals/ai/SchulzT18}.
In particular, the idea of \cite{DBLP:journals/ai/SchulzT18} is to ``fix'' preferred extensions that are not stable by applying a structural revision of the original AF. This revision can be used to make stable some preferred extensions, no matter whether the AF admits a stable one. 
This goal is rather different from the ideas underlying the paracoherent semantics which aims at finding a (minimal) remedy to missing stable extensions without modifying the original AF.
Nonetheless, an interesting open question is whether stabilizers are in some way related to the concept of responsible sets of~\cite{DBLP:journals/ai/SchulzT18}.


\section{Conclusion}
This paper introduces a different perspective on AFs with no stable extension, by proposing the paracoherent extensions.
The new semantics coincides with the stable semantics, whenever a stable extension exists, and has a natural counterpart in paracoherent semantics for logic programs.
Moreover, we studied the computational complexity  of the main reasoning tasks, that remain unchanged in comparison with semi-stable and stage semantics.
Finally, we compared paracoherent semantics with several existing argumentation semantics, by showing an interesting distinctive symmetric behaviour on graphs that involve odd-length cycles.
In the literature of Argumentation frameworks several semantics have been proposed, each one having some distinctive feature.
It is difficult, and probably impossible, to identify an overall winner in this context, where often proposals are incomparable.
However, it can be observed that symmetry is very common property in nature, and has been often subject of cross-disciplinary studies. 
Already our running example demonstrates the desirable symmetric behaviour of our semantics is useful while modelling 
a very well known problem (Stable Roommates) with a large number of applications in real world~\cite{marriage}.
Our contribution is also relevant because it adds a missing link in the panorama of correspondences of AF semantics with logic programming ones.

As future work, we will deepen the study of the symmetrical behavior and to study additional properties of our semantics as done in~\cite{DBLP:journals/ai/BaroniG07}. 
Moreover, we plan to implement paracoherent semantics to solve typical reasoning problems considered in competitions. 

\bibliographystyle{acmtrans}
\bibliography{biblio}

\label{lastpage}
\end{document}